\documentclass[lettersize,journal]{IEEEtran}
\usepackage{amsmath,amsfonts,amsthm}
\usepackage{algorithmic}
\usepackage{algorithm}
\renewcommand{\algorithmicrequire}{ \textbf{Input:}}     

\usepackage{array}
\usepackage[caption=false,font=footnotesize,labelfont=sf,textfont=sf]{subfig}
\usepackage{textcomp}
\usepackage{stfloats}
\usepackage{url}
\usepackage{verbatim}
\usepackage{graphicx}
\usepackage{cite}
\usepackage{makecell}

\usepackage{array}
\usepackage{multirow}
\newtheorem{definition}{Definition}
\newtheorem{theorem}{Theorem}

\newtheorem{lemma}[theorem]{Lemma}
\begin{document}

\title{HoGS: Homophily-Oriented Graph Synthesis for Local Differentially Private GNN Training}

\author{Wen Xu,~
        Zhetao Li,~\IEEEmembership{Member,~IEEE},~
        Yong Xiao,~
        Pengpeng Qiao,~
        Mianxiong Dong,~\IEEEmembership{Senior Member,~IEEE},~
        Kaoru Ota,~\IEEEmembership{Member,~IEEE}
\thanks{This work was supported in part by the National Natural Science Foundation of China under Grant W2411053 and Grant U23B2027; in part by JSPS KAKENHI Grant Numbers JP22K11989, JP24K14910, JP25K00139 and JST ASPIRE Grant Number JPMJAP2344. (Corresponding author: Zhetao Li.)
}
\thanks{Wen Xu is with the College of Information Science and Technology, Jinan University, Guangzhou 510632, China, and also with the Department of Sciences and Informatics, Muroran Institute of Technology, Muroran 050-8585, Japan \protect (E-mail: wenxu018@gmail.com).}
\thanks{Zhetao Li and Yong Xiao are with the College of Information Science and Technology, Jinan University, Guangzhou 510632, China \protect (E-mail: liztchina@hotmail.com; xiaoyong@stu2022.jnu.edu.cn).}
\thanks{Pengpeng Qiao is with the School of Computing, Institute of Science Tokyo, Tokyo 1528550, Japan \protect (E-mail: peng2qiao@gmail.com).}
\thanks{Mianxiong Dong and Kaoru Ota are with the Department of Sciences and Informatics, Muroran Institute of Technology, Muroran 050-8585, Japan \protect (E-mail: mxdong@muroran-it.ac.jp; ota@muroran-it.ac.jp).}
}

\markboth{}%
{Shell \MakeLowercase{\textit{et al.}}: HoGS: Homophily-Oriented Graph Synthesis for Local Differentially Private GNN Training}


\maketitle

\begin{abstract}
Graph neural networks (GNNs) have demonstrated remarkable performance in various graph-based machine learning tasks by effectively modeling high-order interactions between nodes. However, training GNNs without protection may leak sensitive personal information in graph data, including links and node features. Local differential privacy (LDP) is an advanced technique for protecting data privacy in decentralized networks. Unfortunately, existing local differentially private GNNs either only preserve link privacy or suffer significant utility loss in the process of preserving link and node feature privacy. In this paper, we propose an effective LDP framework, called HoGS, which trains GNNs with link and feature protection by generating a synthetic graph. Concretely, HoGS first collects the link and feature information of the graph under LDP, and then utilizes the phenomenon of homophily in graph data to reconstruct the graph structure and node features separately, thereby effectively mitigating the negative impact of LDP on the downstream GNN training. We theoretically analyze the privacy guarantee of HoGS and conduct experiments using the generated synthetic graph as input to various state-of-the-art GNN architectures. Experimental results on three real-world datasets show that HoGS significantly outperforms baseline methods in the accuracy of training GNNs.
\end{abstract}

\begin{IEEEkeywords}
Graph neural networks, local differential privacy, graph synthesis, decentralized network data.
\end{IEEEkeywords}

\section{Introduction} \label{INTRODUCTION}
\IEEEPARstart{O}{ver} the past few years, the advances of deep learning on graphs have greatly enhanced the ability to handle graph-based machine learning tasks. Among various graph learning algorithms, Graph Neural Networks (GNNs) have become a mainstream paradigm due to their inherent efficiency and strong generalization capabilities, achieving state-of-the-art performance in a wide range of fields such as recommendation systems \cite{ying2018graph,zhang2025embedding}, traffic forecasting \cite{jiang2022graph}, and signal processing \cite{he2025icgnn}. However, the graph data used for training is typically decentralized across local user devices and may contain sensitive individual information, including links and node features. For instance, in an infectious disease surveillance system, training a GNN to identify high-risk individuals requires access to private health information (e.g., pneumonia or influenza) and sensitive contact records captured by on-device location sensors. Similarly, in financial risk management, while GNNs can effectively detect money-laundering syndicates by leveraging on-device transaction logs and account attributes, such data inherently reveal confidential financial relationships and individual asset statuses stored locally. Fig. \ref{architecture} illustrates the architecture for sensitive data collection and training in decentralized environment. Thus, it is necessary to develop solutions that enable GNNs to be trained effectively for graph learning tasks without compromising the privacy of links and node features in the graph.

\begin{figure}[!t]
	\centering
	\includegraphics[width=0.85\linewidth]{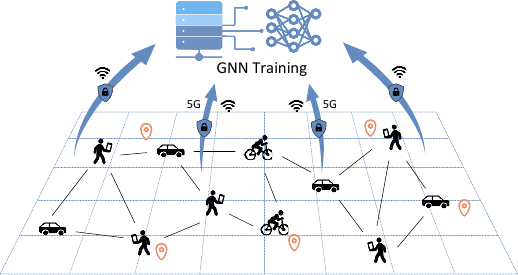}
	\caption{The architecture for sensitive data collection and training in decentralized environment.}
	\label{architecture}
\end{figure}

Local differential privacy (LDP) \cite{duchi2013local,li2024user} is a promising technique for protecting the privacy of sensitive data in decentralized networks, where each data owner shares a locally perturbed version of their data with a (possibly malicious) curator. Since LDP can ensure the local privacy of massive end entities, it has been adopted by many well-known companies, e.g., Google \cite{erlingsson2014rappor}, Apple \cite{DPTeamApple}, and Uber \cite{johnson2018towards}. However, current studies on local differential privacy GNNs mainly focus on preserving link privacy, while assuming that the curator can fully access node features \cite{hidano2022degree,zhu2023blink}. For the problem of simultaneously preserving link and node feature privacy, the few existing preliminary explorations (such as \cite{lin2022towards}) yielded poor performance by merely maintaining graph sparsity. Therefore, there is an urgent need for a novel scheme to achieve a better privacy-utility tradeoff for dual privacy scenario.

In this paper, we investigate this problem by considering a rigorous setting where both node features and graph topology (i.e., links) are treated as private information.
We focus on generating a synthetic graph that is semantically similar to the original graph under LDP, which can be used to train an effective GNN. As shown in Fig. \ref{example}, the curator reconstructs the graph based on the obfuscated local neighborhood and node features transmitted by decentralized nodes. To this end, we face the following challenges:

\begin{figure}[!t]
	\centering
	\includegraphics[width=0.7\linewidth]{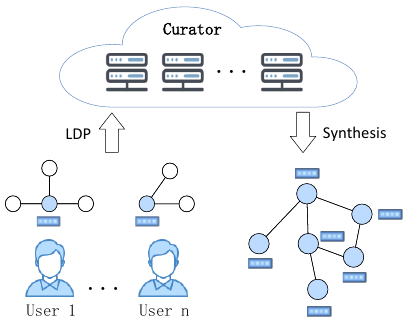}
	\caption{The example of reconstructing the graph under LDP guarantee.}
	\label{example}
\end{figure}

\begin{itemize}
    \item \textit{How to mitigate the distortion of noisy graph structure?} In the LDP setting, the curator cannot access the original graph structure, which have to be perturbed locally by the nodes. However, LDP mechanisms (e.g., randomized response \cite{warner1965randomized}) introduce significant noise that can severely distort the graph structure. In addition, a portion of the privacy budget needs to be allocated to node features, which further exacerbates the destruction of the graph structure. Since GNN training is very sensitive to changes in the links, it is not trivial to design a scheme that improve the quality of noisy graph structure.
    \item \textit{How to effectively denoise the perturbed node features?} GNNs learn a node's representation by aggregating information from its neighbors. Adding noise to node features may lead to inaccurate node representations, thereby compromising the effectiveness of GNN training. Existing methods denoise node features by calculating the mean of multi-hop neighbor features. However, this denoising can be counterproductive when the graph structure itself is noisy. Hence, it is challenging to perform effective denoising on node features while preserving graph structure privacy.
\end{itemize}

To address the above challenges, we propose HoGS (\textbf{H}omophily-\textbf{o}riented \textbf{G}raph \textbf{S}ynthesis), a novel LDP-based graph synthesis framework for training GNNs with link and feature protection. 
The core idea of the framework is to leverage homophily \cite{mcpherson2001birds} -- a common property of many real-world networks like citation and social networks -- to denoise graph topology and node features.
Specifically, we first collect perturbed adjacency lists and feature vectors from decentralized nodes to ensure LDP for links and features. Based on the phenomenon of homophily that nodes with similar features are more likely to be connected, we design a graph reconstruction mechanism that utilizes node features. This mechanism combines the cosine similarity of noisy feature vectors and link information provided by the noisy adjacency lists to determine the existence probability of each potential link, and reconstructs the graph topology by retaining links with higher probabilities. Conversely, homophily also implies that neighboring nodes often have similar features. Therefore, links can, in turn, be utilized to reconstruct node features. To mitigate the impact of link noise on feature reconstruction, we design a weighted feature aggregation mechanism that can suppress interference from unreliable connections and effectively denoise node features. In summary, our main contributions are as follows:
\begin{itemize}
    \item We present a novel and practical LDP framework HoGS to generate a synthetic graph that can be used for GNN training, while preserving the privacy of links and node features.
    \item We innovatively leverage homophily to denoise graph structure and node features in a bidirectional manner, and theoretically prove that HoGS satisfies LDP.
    \item We conduct comprehensive experiments on three GNN architectures using three real-world datasets. Experimental results demonstrate that the synthetic graph generated by HoGS can train effective GNNs.
\end{itemize}

The remainder of this paper is organized as follows. Section \ref{Preliminaries} introduces the background of GNNs and LDP and formally defines the problem. Section \ref{Our Approach} provides details of the proposed framework HoGS. In Section \ref{Algorithm Analysis}, we analyze the privacy and computational complexity of HoGS. Section \ref{Evaluation} presents and discusses extensive experimental results. In Section \ref{Related Work}, we review related work. In Section \ref{Conclusion}, we conclude the paper.

\section{Preliminaries} \label{Preliminaries}
\subsection{Graph Neural Networks}
Consider a undirected graph $G=(V,E,X)$, where $V=\{v_1,v_2,\cdots,v_n\}$ is the set of all $n$ nodes (i.e., users), $E \subseteq V \times V$ is the set of edges (i.e., links) and $X\in R^{n \times d}$ is the feature matrix. The $i$-th row of $X$ is the $d$-dimensional feature vector of node $v_i$ (denoted as $X_i$). The topology of graph $G$ can be represented as an adjacency matrix $A$, where $A_{ij}=1$ if $(v_i, v_j)\in E$, otherwise $A_{ij}=0$. In this paper, we focus on the semi-supervised node classification task on graph $G$. In this setting, $V=V_l\cup V_u$, where $V_l$ and $V_u$ are the sets of labeled and unlabeled nodes, respectively. For each $v_i \in V_l$, $Y_i$ denotes its $c$-dimensional label vector (one-hot encoding \cite{rodriguez2018beyond}), where $c$ is the number of classes. The label vectors of all $l$ nodes in $V_l$ form the label matrix $Y\in \{0,1\}^{l \times c}$.

Graph Neural Networks (GNNs) can process graph data and perform the node classification tasks. Specifically, a GNN model learns a high-dimensional representation for each node by aggregating the embedding vectors of adjacent nodes followed by a parameterized nonlinear transformation at each layer. At layer $k$, the embedding $\mathbf{h}_i^{(k)}$ of node $v_i\in V$ learned by the GNN can be represented as:
\begin{equation}
	\mathbf{h}_i^{(k)} = \Phi^{(k)} \left(\operatorname{Agg}\left(\left\{\mathbf{h}_j^{(k-1)}: \forall v_j \in \mathcal{N}(v_i)\right\}\right)\right),
\end{equation}
where $\mathcal{N}(v_i)$ denotes the set of adjacent nodes of $v_i$. $\operatorname{Agg}(\cdot)$ is a permutation invariant aggregator function, e.g., max or mean. $\Phi^{(k)}$ is a trainable transformation such as a neural network. Note that the initial embedding $\mathbf{h}_i^{(0)}=X_i$, which is the feature vector of $v_i$. The last layer outputs a $c$-dimensional embedding vector, which is then converted into the predicted probability for each label through a softmax layer.

\subsection{Local Differential Privacy}
Local Differential Privacy (LDP) \cite{duchi2013local} is an advanced privacy protection standard that aims to prevent the leakage of individual sensitive information during the data collection process. Specifically, under the LDP model, users first perturb their personal data locally through a differential privacy mechanism before sending it to an curator, thus eliminating the need for a trusted third party. Formally, LDP can be defined as follows:

\begin{definition}[$\varepsilon$-LDP]
Given $\varepsilon>0$, an algorithm $\mathcal{M}$ provides $\varepsilon$-LDP, if for any possible pair of input data $x$ and $x^{\prime}$, and for any possible output $x^{*}$ of $\mathcal{M}$, we have
\begin{equation}
    \label{LDP}
		\operatorname{Pr}\left[\mathcal{M}\left(x\right) = x^{*}\right] \leq e^{\varepsilon} \operatorname{Pr}\left[\mathcal{M}\left(x^{\prime}\right) = x^{*}\right].
\end{equation}
\end{definition}

Essentially, $\varepsilon$-LDP guarantees that an attacker cannot distinguish whether the input data is $x$ or $x^{\prime}$ with a certain confidence by observing the output of $\mathcal{M}$. The strength of privacy protection is controlled by the parameter $\varepsilon$ (called privacy budget), i.e., a smaller $\varepsilon$ indicates a stronger privacy guarantee.

\noindent
\textbf{1-Bit Mechanism.} The 1-Bit mechanism \cite{ding2017collecting} can be used for queries with binary answers. Specifically, given a private input value $x\in [\alpha,\beta]$, this mechanism outputs a noisy value $y\in \{0,1\}$ drawn from the following distribution:
\begin{equation}
    y=\left\{\begin{array}{rr}
		1, & \text { with probability } \frac{1}{e^\varepsilon + 1} + \frac{x}{\beta-\alpha}\cdot \frac{e^\varepsilon - 1}{e^\varepsilon + 1}, \\
		0, & \text { otherwise} .
	\end{array}\right.
\end{equation}

Ding et al. proved that this mechanism satisfies $\varepsilon$-LDP \cite{ding2017collecting}. In fact, when $x$ is also a binary value, the 1-Bit mechanism degenerates into Warner’s RR (Randomized Response) \cite{warner1965randomized}.

\noindent
\textbf{Composition Properties.} LDP algorithms satisfy the following composition properties, which provide privacy guarantees for multiple computations.
\begin{theorem}[sequential composition \cite{mcsherry2009privacy}]
\label{sequential composition}
    Combining multiple sub-mechanisms that satisfy LDP for $\{\varepsilon_1,\cdots, \varepsilon_k\}$ results in a mechanism satisfying $\varepsilon$-LDP, where $\varepsilon = \sum_{i=1}^{k} \varepsilon_i$.
\end{theorem}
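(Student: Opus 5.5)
The plan is to prove the sequential composition property directly from the definition of $\varepsilon$-LDP in (\ref{LDP}) by factoring the output probability. Consider a composite mechanism $\mathcal{M}(x) = (\mathcal{M}_1(x), \mathcal{M}_2(x), \cdots, \mathcal{M}_k(x))$, where each $\mathcal{M}_i$ satisfies $\varepsilon_i$-LDP. We allow the general, adaptive case in which $\mathcal{M}_i$ may also take the previous outputs $x_1^*,\dots,x_{i-1}^*$ as auxiliary input. In that case the hypothesis is that, for every fixed value of the auxiliary input, the map $x \mapsto \mathcal{M}_i(x; x_1^*,\dots,x_{i-1}^*)$ is $\varepsilon_i$-LDP. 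The non-adaptive setting used in HoGS, with independent randomness for the link and feature perturbations, is the special case where this dependence is absent.

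Fix any two inputs $x, x'$ and any output tuple $x^* = (x_1^*, \dots, x_k^*)$. Apply the chain rule:
\begin{equation*}
\operatorname{Pr}[\mathcal{M}(x) = x^*] = \prod_{i=1}^{k} \operatorname{Pr}\left[\mathcal{M}_i(x; x_1^*,\dots,x_{i-1}^*) = x_i^*\right].
\end{equation*}
Each factor equals the conditional probability because each sub-mechanism uses fresh, independent internal randomness once the prior outputs are fixed. The same factorization holds for $x'$. Applying inequality (\ref{LDP}) for $\mathcal{M}_i$ with the auxiliary input fixed gives, for each $i$,
\begin{equation*}
\operatorname{Pr}[\mathcal{M}_i(x;\cdot) = x_i^*] \le e^{\varepsilon_i}\operatorname{Pr}[\mathcal{M}_i(x';\cdot) = x_i^*].
\end{equation*}
All the factors are nonnegative, so multiplying these $k$ inequalities yields
\begin{equation*}
\operatorname{Pr}[\mathcal{M}(x) = x^*] \le e^{\sum_i \varepsilon_i}\operatorname{Pr}[\mathcal{M}(x') = x^*].
\end{equation*}
This is exactly $\varepsilon$-LDP with $\varepsilon = \sum_{i=1}^k \varepsilon_i$.

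The main point of care is justifying the product factorization, namely that the randomness of the sub-mechanisms is independent given earlier outputs. This is an assumption implicit in ``combining'' mechanisms, and I will state it explicitly. A second point is that outputs may be continuous rather than discrete. If so, I will replace probabilities of point outputs by densities, or by probabilities of measurable sets, and run the same argument with conditional densities. Integrating the pointwise density bound over any output event then preserves the inequality. For the discrete mechanisms in this paper, such as the 1-Bit mechanism and randomized response, the pointwise argument above already suffices.
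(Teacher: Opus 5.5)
Your proof is correct. The paper gives no proof of this statement; it imports it by citation from McSherry (2009), so your argument supplies what the paper takes for granted rather than following an in-paper route.

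What you write out is the standard argument: factor the joint output probability by the chain rule using independent internal randomness, bound each factor by $e^{\varepsilon_i}$ for the fixed auxiliary input, and multiply. Spelling it out buys two things the bare citation hides.

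First, it makes explicit the independence assumption implicit in the word ``combining'', and it covers adaptive composition.

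Second, your bound is established pointwise for a fixed pair $(x,x')$, so it goes through unchanged for any neighboring relation. This is what the paper actually needs where it invokes the theorem. The two sub-mechanisms in Lemma~\ref{lemma} are private with respect to different relations: one bit of $A_i$ versus one bit of $X_i$. Neither is the ``any pair of inputs'' relation of Definition~1. To apply the theorem there rigorously, view both sub-mechanisms as functions of $(A_i,X_i)$, with neighbors differing in at most one bit of each component. Your argument then yields $(\varepsilon_a+\varepsilon_f)$ with respect to that relation.
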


\begin{theorem}[post-processing \cite{dwork2006calibrating}]
\label{post-processing}
    Let an algorithm $\mathcal{M}$ satisfies $\varepsilon$-LDP, then $g(\mathcal{M}(x))$ for any function $g$ still satisfies $\varepsilon$-LDP.
\end{theorem}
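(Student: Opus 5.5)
The plan is to work directly from the definition of $\varepsilon$-LDP in (\ref{LDP}). Fix an arbitrary pair of inputs $x,x'$ and an arbitrary output $z$ of the post-processed mechanism $g\circ\mathcal{M}$. The goal is the inequality $\Pr[g(\mathcal{M}(x))=z]\le e^{\varepsilon}\Pr[g(\mathcal{M}(x'))=z]$. The key observation is that $g$ never sees the private input $x$; it only sees the output of $\mathcal{M}$. So the event $\{g(\mathcal{M}(x))=z\}$ can be rewritten as the event that $\mathcal{M}(x)$ falls in the preimage set $S_z=g^{-1}(z)=\{x^*: g(x^*)=z\}$.

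\textbf{Deterministic $g$, discrete outputs.} First I would treat the case where $g$ is deterministic and $\mathcal{M}$ has a countable output space, as in the mechanisms used in the paper (e.g.\ the 1-Bit mechanism). Write
\[
\Pr[g(\mathcal{M}(x))=z]=\sum_{x^*\in S_z}\Pr[\mathcal{M}(x)=x^*].
\]
Apply the pointwise bound (\ref{LDP}) to each term. Since $e^{\varepsilon}$ is a common factor, summing the termwise bounds gives
\[
\sum_{x^*\in S_z}\Pr[\mathcal{M}(x)=x^*]\le e^{\varepsilon}\sum_{x^*\in S_z}\Pr[\mathcal{M}(x')=x^*]=e^{\varepsilon}\Pr[g(\mathcal{M}(x'))=z].
\]
This settles the deterministic case.

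\textbf{Randomized $g$.} Next I would allow $g$ to use internal randomness $r$ that is independent of $x$ and of the coins of $\mathcal{M}$. Conditioning on $r$ gives
\[
\Pr[g(\mathcal{M}(x))=z]=\sum_{x^*}\Pr[\mathcal{M}(x)=x^*]\,\Pr_r[g_r(x^*)=z].
\]
This is a nonnegative combination of the quantities $\Pr[\mathcal{M}(x)=x^*]$, with weights that do not depend on $x$. The same termwise bound therefore applies and gives the claim.

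\textbf{Continuous outputs.} If $\mathcal{M}$ has continuous outputs, I would read (\ref{LDP}) as a bound on densities, or equivalently on measurable events. The sums above become integrals against the kernel $\Pr[g(y)\in\cdot]$, and monotonicity of the integral gives the same conclusion.

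The main obstacle is not technical. It is being careful that the weights $\Pr_r[g_r(x^*)=z]$ are independent of the private input. This is exactly the requirement that $g$ operates on the released output alone, and it is the hypothesis that must be made explicit. Without it, an adversary-side function that also reads $x$ would break the argument.
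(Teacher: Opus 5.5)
Your proof is correct and is the standard textbook argument. The paper does not prove this statement; it simply cites Dwork et al. So the comparison is between that bare citation and your self-contained derivation: preimage decomposition for deterministic $g$, averaging over input-independent coins for randomized $g$, and the event/density reading for continuous outputs. In that last case you should also require $g$, or its kernel, to be measurable.

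Your version makes explicit two things the citation leaves implicit.

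First, you fix an arbitrary pair $x,x'$ and use only the inequality for that pair. The argument therefore applies verbatim to restricted neighbouring relations such as $\varepsilon_a$-edge LDP and $\varepsilon_f$-feature LDP. Those are what the paper actually post-processes, although the cited theorem is phrased for plain $\varepsilon$-LDP.

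Second, you isolate the requirement that the auxiliary randomness of $g$ be independent of the private input. In the paper's main theorem, the curator's reconstruction reads all nodes' reports, and the other nodes' reports play the role of your coins $r$. This is legitimate because the paper's neighbouring relation alters a single node's adjacency list or feature vector and holds everything else fixed. Under a relationship-level notion, removing the edge $\{v_i,v_j\}$ flips both $A_{ij}$ and $A_{ji}$. The report $\tilde{A}_{ji}$ would then depend on the protected bit, your independence condition would fail, and the guarantee for that edge would degrade to $2\varepsilon_a$.
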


\subsection{Formal Problem Definition}
In our study, we are interested in the problem of learning a GNN model with both links and features privacy. In particular, we consider a network system over decentralized nodes, where each node $v_i$ locally holds its adjacency list $A_i$ (i.e., the $i$-th row of the adjacency matrix $A$) and feature vector $X_i$. Assume that an untrusted curator has access to nodes $V$ and labels $Y$, but cannot observe adjacency matrix $A$ (i.e., link information) and features $X$ in the graph, which are private information of the nodes. Thus, our goal is to design an $\varepsilon$-LDP algorithm $\mathcal{M}$ to help the curator privately collect link and feature information from the nodes, and then construct a synthetic graph $\hat{G}$ that can be used to train a GNN model for classifying unlabeled nodes in $V_u$. Table \ref{notations} summarizes the important notations used in this paper.

\begin{table}[!t]
	\renewcommand{\arraystretch}{1.2}
	\caption{Summary of Important Notations}
	\label{notations}
	\centering
	\begin{tabular}{c c}
		\hline
		Symbol & Description \\
		\hline
		$n$ & Number of nodes  \\
		$\varepsilon$ & Privacy budget \\
		$G=(V,E,X)$ & Graph with nodes $V$, edges $E$ and feature matrix $X$ \\
        $v_i$ & $i$-th node in $V$ \\
        $X_i$ & $i$-th row of $X$ (i.e., feature vector of $v_i$)  \\
        $d$ & Dimension of $X_i$  \\
		$A$ & Adjacency matrix of $G$  \\
		$\delta$ & privacy parameter \\
		$\tau$ & threshold \\
        $V_i$ & potential neighbor set of $v_i$ \\
		\hline
	\end{tabular}
\end{table}

To address privacy concerns on links and features, we adopt the concept of edge LDP \cite{qin2017generating} and introduce a formal definition of LDP for node features.

\begin{definition}[$\varepsilon$-edge LDP]
    Given $\varepsilon>0$, an algorithm $\mathcal{M}$ provides $\varepsilon$-edge LDP, if for any two possible adjacency list $A_i$ and $A_i^{\prime}$ ($i\in [n]$) that differ in one bit, and for any possible output $O$ of $\mathcal{M}$,
    \begin{equation}
    \label{edgeLDP}
		\operatorname{Pr}\left[\mathcal{M}\left(A_i\right) = O\right] \leq e^{\varepsilon} \operatorname{Pr}\left[\mathcal{M}\left(A_i^{\prime}\right) = O\right].
\end{equation}
\end{definition}

\begin{definition}[$\varepsilon$-feature LDP]
    Given $\varepsilon>0$, an algorithm $\mathcal{M}$ provides $\varepsilon$-feature LDP, if for any two possible feature vector $X_i$ and $X_i^{\prime}$ ($i\in [n]$) that differ in one bit, and for any possible output $O$ of $\mathcal{M}$,
    \begin{equation}
    \label{featureLDP}
		\operatorname{Pr}\left[\mathcal{M}\left(X_i\right) = O\right] \leq e^{\varepsilon} \operatorname{Pr}\left[\mathcal{M}\left(X_i^{\prime}\right) = O\right].
\end{equation}
\end{definition}

Edge LDP (resp. Feature LDP) protects any single bit in an adjacency list (resp. a feature vector) with privacy budget $\varepsilon$. In other words, if an algorithm provides both edge LDP and feature LDP, then any links and any features of each node has a bounded impact on the final output, thus preserving the privacy of both links and features.

\noindent
\textbf{Discussion.} Different from the user-level neighbor definition for node features \cite{sajadmanesh2021locally,lin2022towards}, feature-LDP considers any two feature vectors as neighbors if they differ by exactly one bit. Nevertheless, feature-LDP can still achieve strong indistinguishability for any features of each node in the graph, which can support the privacy requirements on node features while retaining high utility.

\section{Our Approach: HoGS} \label{Our Approach}

\subsection{Overview} \label{Overview}
To achieve GNNs learning with link and feature protection under LDP, we propose a new framework HoGS. This framework generates a high-quality synthetic graph that enables effective training of GNN models while satisfying LDP. HoGS is based on the following idea: \textit{a network system composed of graph topology and node feature information generally has the phenomenon of homophily, meaning that neighboring nodes tend to have similar features, and nodes with similar features have a higher probability of being connected \cite{mcpherson2001birds}.} Therefore, after independently injecting noise into the links and node features, we can exploit these two types of information to perform bidirectional denoising. 

Specifically, as shown in Fig. \ref{overview}, the workflow of HoGS consists of the following three phases:
\begin{itemize}
	\item \textbf{Information Collection.} In this phase, the curator does not have any information about the graph topology and node features for GNN training. Therefore, each node perturbs its adjacency list and feature vector locally and then sends this noisy information to the curator. The details of this phase are referred to Section \ref{Information Collection}.
    \item \textbf{Topology Reconstruction.} At this stage, the curator first calculates the feature similarity between nodes based on the collected feature vectors, which is regarded as the prior knowledge of node connections. Then, combined with the noisy adjacency lists, the curator uses a Bayesian estimation mechanism similar to Blink \cite{zhu2023blink} to determine the final connection probability between nodes, and then reconstruct the graph topology based on this probability. The details of this stage are described in Section \ref{Topology Reconstruction}.
    \item \textbf{Feature Reconstruction.} In this step, for each node, the curator determines its potential neighbors based on the final connection probabilities obtained in the previous phase, and uses this probability as weights to aggregate the noisy feature vectors of these potential neighbors. The aggregated results are used to reconstruct the feature vector for each node. The details of this step are in Section \ref{Feature Reconstruction}.
\end{itemize}

\begin{figure*}[!t]
	\centering
	\includegraphics[width=0.85\linewidth]{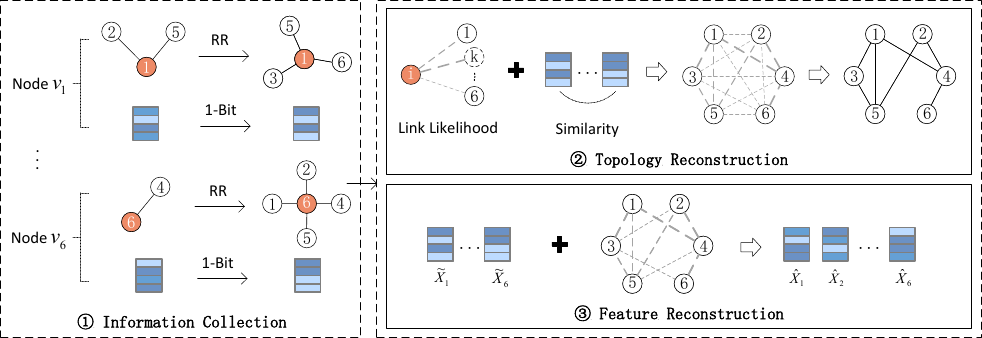}
	\caption{The overview of our HoGS framework.}
	\label{overview}
\end{figure*}

\subsection{Information Collection} \label{Information Collection}
Since the two basic information required for GNNs training: graph topology and node features, are stored locally by decentralized nodes, the curator first needs to collect them from all nodes. That is, we consider privately collecting the adjacency list and feature vector of each node, which contain detailed information about the graph topology and node features.

Based on the Theorem \ref{sequential composition}, to achieve LDP for links and features under a total privacy budget $\varepsilon$, we need to assign $\varepsilon$ to the adjacency list and feature vector. To this end, for each node, we perturb its adjacency list and feature vector with privacy budgets $\varepsilon_a$ and $\varepsilon_f$, respectively, which are controlled by privacy parameter $\delta\in [0,1]$ as $\varepsilon_a=(1-\delta)\varepsilon$ and $\varepsilon_f=\delta\varepsilon$.
Specifically, given adjacency list $A_i\in \{0,1\}^n$ of node $v_i$, we adopt the RR (Randomized Response) to flip each bit in $A_i$ with probability $p_1=\frac{1}{e^{\varepsilon_a} + 1}$, obtaining a noisy adjacency list $\tilde{A}_i\in \{0,1\}^n$. Given feature vector $X_i \in [\alpha, \beta]^d$ of node $v_i$, we adopt the 1-Bit mechanism to perturb each bit in $X_i$ with probability $p_2 = \frac{1}{e^{\varepsilon_f} + 1} + \frac{x}{\beta-\alpha}\cdot \frac{e^{\varepsilon_f} - 1}{e^{\varepsilon_f} + 1}$, resulting in a noisy feature vector $\tilde{X}_i\in \{0,1\}^d$. The node $v_i$ then sends $\tilde{A}_i$ and $\tilde{X}_i$ to the curator. This process is described in Algorithm \ref{alg:IC}.

\begin{algorithm}[!t]
	\caption{Information Collection}\label{alg:IC}
	\begin{algorithmic}[1]
		\renewcommand{\algorithmicrequire}{ \textbf{Input:}}
		\REQUIRE Adjacency list $A_i\in \{0,1\}^n$ and feature vector $X_i \in [\alpha, \beta]^d$ of node $v_i\in V$, privacy budget $\varepsilon=\varepsilon_a+\varepsilon_f$
		\ENSURE Noisy adjacency list $\tilde{A}_i\in \{0,1\}^n$ and noisy feature vector $\tilde{X}_i\in \{0,1\}^d$ of node $v_i\in V$
        \STATE \textsl{// adjacency list perturbation}
        \FOR {$j\in \{1,2,\cdots,n\}$ }
        \STATE $\tilde{A}_{ij}=\text{RR} (A_{ij}, \varepsilon_a)$;
        \ENDFOR 
        \STATE \textsl{// feature vector perturbation}
        \FOR {$k\in \{1,2,\cdots,d\}$}
        \STATE $\tilde{X}_{ik}=\text{1-Bit} (X_{ik}, \varepsilon_f)$;
        \ENDFOR 
		\RETURN $\tilde{A}_i, \tilde{X}_i$
	\end{algorithmic}
\end{algorithm}

\subsection{Topology Reconstruction} \label{Topology Reconstruction}
Now, the curator reconstructs the graph topology based on the noisy information collected from all nodes in the previous phase. A simple solution is that the curator assembles the noisy adjacency lists $\tilde{A}_1,\tilde{A}_2,\cdots,\tilde{A}_n$ into an adjacency matrix $\tilde{A}$, which can directly form the synthetic graph topology. However, as mentioned in previous studies \cite{lin2022towards,wu2022linkteller}, this will make the synthetic graph too dense to be useful. Blink \cite{zhu2023blink} proposed to use node degree information to denoise the graph structure based on the adjacency matrix. Unfortunately, to satisfy edge LDP, node degree also needs to consume privacy budget, which further increases the noise level and destroys data utility in the case where both node features and graph topology are private information.

To effectively denoise the graph topology without further splitting the privacy budget, we design a new graph reconstruction algorithm that utilizes node features as auxiliary information. The intuition behind this algorithm is that similar nodes tend to connect to each other, a property known as homophily. As mentioned before, this property has been observed in many network systems. In particular, the curator regards the similarity between noisy feature vectors as the prior link probability, which is combined with the likelihood evidence provided by the noisy adjacency matrix to obtain the posterior link probability. As shown in Algorithm \ref{alg:TR}, for any two nodes $v_i,v_j\in V$, the curator first calculates the cosine similarity $s_{ij}$ between their noisy feature vectors $\tilde{X}_i$ and $\tilde{X}_j$:
\begin{equation}
    s_{ij} = \frac{\tilde{X}_i \cdot \tilde{X}_j}{\|\tilde{X}_i\|\|\tilde{X}_j\|},
\end{equation}
where $\|\cdot\|$ represents the Euclidean norm. Note that since each bit in the noisy feature vectors is binary, $s_{ij}$ ranges from 0 to 1, i.e., $s_{ij}\in [0,1]$, which can be used as the prior probability that a link exists between $v_i$ and $v_j$.

Then, inspired by Blink \cite{zhu2023blink}, the curator evaluates the link likelihood based on the received noisy adjacency matrix and calculates the posterior probability using Bayesian estimation. Specifically, for each potential link $(v_i,v_j)$, the curator considers the corresponding two bits $\tilde{A}_{ij}$ and $\tilde{A}_{ji}$ in the noisy adjacency matrix $\tilde{A}$, and computes the likelihood of receiving the bits $(\tilde{A}_{ij},\tilde{A}_{ji})$ conditioned on whether the link $(v_i,v_j)$ exists in the true graph:
\begin{equation}
\begin{aligned}
    l_{ij} & = \operatorname{Pr}[(\tilde{A}_{ij},\tilde{A}_{ji})|(A_{ij},A_{ji}) = (1, 1)] \\
    & = \left\{\begin{array}{ll}
    p_{1}^2, & \text{if}~(\tilde{A}_{i j}, \tilde{A}_{j i})=(0,0), \\
    p_{1}(1-p_{1}), & \text{if}~(\tilde{A}_{i j}, \tilde{A}_{j i})=(0,1)~\text{or}~(1,0),\\
    (1-p_{1})^2, & \text{if}~(\tilde{A}_{i j}, \tilde{A}_{j i})=(1,1);
    \end{array} \right.
\end{aligned} 
\end{equation}
and 
\begin{equation}
\begin{aligned}
    l_{ij}^{\prime} & = \operatorname{Pr}[(\tilde{A}_{ij},\tilde{A}_{ji})|(A_{ij},A_{ji}) = (0, 0)] \\
    & = \left\{\begin{array}{ll}
	(1-p_{1})^2, & \text{if}~(\tilde{A}_{i j}, \tilde{A}_{j i})=(0,0), \\
    p_{1}(1-p_{1}), & \text{if}~(\tilde{A}_{i j}, \tilde{A}_{j i})=(0,1)~\text{or}~(1,0),\\
    p_{1}^2, & \text{if}~(\tilde{A}_{i j}, \tilde{A}_{j i})=(1,1).
	\end{array}\right.
\end{aligned} 
\end{equation}
According to the cosine similarity $s_{ij}$, and the likelihoods $l_{ij}$ and $l_{ij}^{\prime}$, the curator obtains the posterior probability of the existence of the potential link $(v_i,v_j)$ as follows:
\begin{equation}
\begin{aligned}
    P_{ij} & = \operatorname{Pr}[(A_{ij},A_{ji}) = (1, 1)|(\tilde{A}_{ij},\tilde{A}_{ji})]\\
    & = \frac{l_{ij}\cdot s_{ij}}{l_{ij}\cdot s_{ij} + l_{ij}^{\prime}\cdot (1-s_{ij})}.
\end{aligned} 
\end{equation}

Finally, to reconstruct the graph topology, the curator retains the potential link $(v_i,v_j)$ with $P_{ij}$ greater than a threshold $\tau$, i.e., $(v_i,v_j)\in \hat{E}$ if $P_{ij}\geq \tau$, where $\hat{E}$ is the set of edges in the synthetic graph $\hat{G}$.

\begin{algorithm}[!t]
	\caption{Topology Reconstruction}\label{alg:TR}
	\begin{algorithmic}[1]
		\renewcommand{\algorithmicrequire}{ \textbf{Input:}}
		\REQUIRE Noisy adjacency list $\tilde{A}_1,\tilde{A}_2,\cdots,\tilde{A}_n\in \{0,1\}^n$, noisy feature vectors $\tilde{X}_1,\tilde{X}_2,\cdots,\tilde{X}_d\in \{0,1\}^d$, threshold $\tau$
		\ENSURE Edge set $\hat{E}$ of synthetic graph $\hat{G}$
        \STATE Initialize $\hat{E}=\emptyset$;
        \FOR {$\{i,j\}\in \{1,2,\cdots,n\}^2$ }
        \STATE \textsl{// Calculate feature similarity}
        \STATE $s_{ij} = \dfrac{\tilde{X}_i \cdot \tilde{X}_j}{\|\tilde{X}_i\|\|\tilde{X}_j\|}$;
        \STATE \textsl{// Infer link probability}
        \STATE $l_{ij} = \operatorname{Pr}[(\tilde{A}_{ij},\tilde{A}_{ji})|(A_{ij},A_{ji}) = (1, 1)]$;
        \STATE $l_{ij}^{\prime} = \operatorname{Pr}[(\tilde{A}_{ij},\tilde{A}_{ji})|(A_{ij},A_{ji}) = (0, 0)]$;
        \STATE $P_{ij}=\dfrac{l_{ij}\cdot s_{ij}}{l_{ij}\cdot s_{ij} + l_{ij}^{\prime}\cdot (1-s_{ij})}$;
        \ENDFOR
        \STATE \textsl{// Generate links} 
        \STATE Add $(v_i,v_j)$ to $ \hat{E}$ if $P_{ij}\geq \tau$;
		\RETURN $\hat{E}$
	\end{algorithmic}
\end{algorithm}

\subsection{Feature Reconstruction} \label{Feature Reconstruction}
Considering that a node in the network also usually has similar features with its neighbors, the curator can further exploit this Homophily to reconstruct the feature vector of each node. Existing schemes \cite{sajadmanesh2021locally,lin2022towards} denoise the features of a node by aggregating the feature vectors of its multi-hop neighbors. However, some edges in the synthetic graph topology do not exist in the original graph. In this case, the features of a node's neighbors may differ significantly from its own, which will cause the aggregated feature vector to deviate from the true feature vector and ultimately impair the performance of the learned GNN.

Based on the above analysis, we propose a weighted feature aggregation method that aggregates the features of each node's potential neighbors using the posterior probability $P_{ij}$ as link weight to reconstruct feature vectors. To reduce noise and computational cost, we restrict the aggregation to potential links with $P_{ij}\geq 0.5$. Specifically, for each node $v_i$, we define its potential neighbor set as $V_i=\{v_j|P_{ij}\geq 0.5\}$. After that, the curator calculates the reconstructed feature vector $\hat{X}_i$ of node $v_i$ as the weighted average of the feature vectors of all nodes in $V_i$:
\begin{equation}
\label{weighted average}
\hat{X}_i=\frac{\sum_{v_j \in V_i} P_{i j} \cdot \tilde{X}_j}{\sum_{v_j \in V_i} P_{i j}}.
\end{equation}

Eq. \ref{weighted average} means that the contribution of a node's potential neighbors to its feature reconstruction is proportional to the link weight $P_{ij}$, which naturally grants greater influence to high-probability links while suppressing the noise that low-probability connections may introduce during feature reconstruction. Algorithm \ref{alg:FR} shows the feature reconstruction procedure. Note that our feature reconstruction can be performed $l$ times, which is data-driven. At this point, the synthetic graph $\hat{G}=(V,\hat{E},\hat{X})$ is generated and can be applied to the training of various GNN models. 

\begin{algorithm}[!t]
	\caption{Feature Reconstruction}\label{alg:FR}
	\begin{algorithmic}[1]
		\renewcommand{\algorithmicrequire}{ \textbf{Input:}}
		\REQUIRE Link probability matrix $P_{ij}$, noisy feature vectors $\tilde{X}_1,\tilde{X}_2,\cdots,\tilde{X}_d\in \{0,1\}^d$
		\ENSURE Feature matrix $\hat{X}$ of synthetic graph $\hat{G}$
        \STATE Initialize $\hat{X}=\emptyset$;
        \STATE Define $V_i=\{v_j|P_{ij}\geq 0.5\}$ as the potential neighbor set of node $v_i$;
        \STATE \textsl{// Generate features}
        \FOR {$i\in \{1,2,\cdots,n\}$ }
        \STATE $\hat{X}_i=\dfrac{\sum_{v_j \in V_i} P_{i j} \cdot \tilde{X}_j}{\sum_{v_j \in V_i} P_{i j}}$;
        \ENDFOR
		\RETURN $\hat{X}$
	\end{algorithmic}
\end{algorithm}

\section{Algorithm Analysis} \label{Algorithm Analysis}
\subsection{Privacy Analysis} \label{Privacy Analysis}
Here we demonstrate that HoGS satisfies $\varepsilon$-LDP. To this end, we first establish the following lemma.

\begin{lemma}
	\label{lemma}
Information Collection satisfies $\varepsilon_a$-edge LDP and $\varepsilon_f$-feature LDP.
\end{lemma}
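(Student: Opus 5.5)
The plan is to treat the two guarantees separately. Each one reduces to a coordinate-wise argument that exploits the product structure of Algorithm~\ref{alg:IC}: every bit of $A_i$ and every coordinate of $X_i$ is perturbed independently with fresh randomness. Because each neighboring pair differs in exactly one coordinate, only one factor of the joint output probability changes.

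\textbf{Edge LDP.} Fix node $v_i$ and two adjacency lists $A_i, A_i'$ that differ only in bit $k$. Take any output $O=(\tilde{A}_i,\tilde{X}_i)$. By independence,
\[
\Pr[\mathcal{M}(A_i)=O]=\Pr[\tilde{X}_i\mid X_i]\prod_{j}\Pr[\tilde{A}_{ij}\mid A_{ij}].
\]
The feature factor does not depend on $A_i$, so it is identical for $A_i$ and $A_i'$. All adjacency factors with $j\neq k$ also coincide. The ratio therefore reduces to $\Pr[\tilde{A}_{ik}\mid A_{ik}]/\Pr[\tilde{A}_{ik}\mid A'_{ik}]$. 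Under RR with flip probability $p_1=1/(e^{\varepsilon_a}+1)$, this ratio is at most $(1-p_1)/p_1=e^{\varepsilon_a}$, which is exactly \eqref{edgeLDP} with budget $\varepsilon_a$.

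\textbf{Feature LDP.} The argument mirrors the edge case. Take $X_i, X_i'$ differing in a single coordinate $k$. The adjacency factors and all feature factors with index other than $k$ cancel, leaving the ratio of 1-Bit output probabilities at coordinate $k$. I will interpret ``differ in one bit'' as ``differ in one coordinate'', with the values ranging over $[\alpha,\beta]$.

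For output $1$, the probability is $\frac{1}{e^{\varepsilon_f}+1}+\frac{x-\alpha}{\beta-\alpha}\cdot\frac{e^{\varepsilon_f}-1}{e^{\varepsilon_f}+1}$, which ranges over $[\frac{1}{e^{\varepsilon_f}+1},\frac{e^{\varepsilon_f}}{e^{\varepsilon_f}+1}]$. The same range holds for output $0$. Hence the ratio of any two such probabilities is bounded by $e^{\varepsilon_f}$. Alternatively, this step can simply invoke the result of Ding et al.~\cite{ding2017collecting} that the 1-Bit mechanism satisfies $\varepsilon_f$-LDP.

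\textbf{Main obstacle.} The only delicate point is the 1-Bit formula as written in the paper, which uses $x/(\beta-\alpha)$ rather than $(x-\alpha)/(\beta-\alpha)$. With the paper's form, the probability can leave $[0,1]$ unless $\alpha=0$. I will handle this by assuming, without loss of generality, that features are shifted so that $\alpha=0$, which matches the original mechanism of~\cite{ding2017collecting}. The extremal bound then follows as above.

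Everything else is routine cancellation. In particular, Theorem~\ref{sequential composition} is not needed for this lemma itself; it will be used afterwards to combine $\varepsilon_a+\varepsilon_f=\varepsilon$, with Theorem~\ref{post-processing} covering the reconstruction phases.
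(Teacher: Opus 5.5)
Your proposal is correct and follows the paper's own proof: you factor the output probability coordinate-wise, cancel every factor except the differing coordinate, and bound the remaining ratio by $(1-p_1)/p_1=e^{\varepsilon_a}$ for RR and by the extremal 1-Bit probabilities $e^{\varepsilon_f}/(e^{\varepsilon_f}+1)$ over $1/(e^{\varepsilon_f}+1)$, giving $e^{\varepsilon_f}$. The paper glosses over two details that you supply without changing the argument: you handle the joint output $(\tilde{A}_i,\tilde{X}_i)$ explicitly, and you normalize to $\alpha=0$ so that the 1-Bit probabilities actually lie in the stated range.
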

\begin{proof}
Assume $\mathbf{a}=(a_1,\cdots,a_n)$ and $\mathbf{a}^{\prime}=(a_1^{\prime},\cdots,a_n^{\prime})$ are two adjacency lists that differ only at the $i$-th bit; $\mathbf{x}=(x_1,\cdots,x_d)$ and $\mathbf{x}^{\prime}=(x_1^{\prime},\cdots,x_d^{\prime})$ are two feature vector that differ only at the $j$-th bit. For ease of presentation, we denote the Randomized Response and the 1-Bit mechanism used in this phase as $\mathcal{M}_1$ and $\mathcal{M}_2$, respectively. Let $o=(o_1,\cdots,o_n)$ be the output of $\mathcal{M}_1$ applied to $\mathbf{a}$ and $\mathbf{a}^{\prime}$, and $s=(s_1,\cdots,s_d)$ be the output of $\mathcal{M}_2$ applied to $\mathbf{x}$ and $\mathbf{x}^{\prime}$. Then, we have:
\begin{equation}
    \frac{\operatorname{Pr}\left[\mathcal{M}_1\left(\mathbf{a}\right) = o\right]}{\operatorname{Pr}\left[\mathcal{M}_1\left(\mathbf{a}^{\prime}\right) = o\right]}=\frac{\operatorname{Pr}\left[a_i=o_i\right]}{\operatorname{Pr}\left[a_i^{\prime}=o_i\right]}\leq \frac{e^{\varepsilon_a}/(e^{\varepsilon_a}+1)}{1/(e^{\varepsilon_a}+1)}=e^{\varepsilon_a},
\end{equation}
and
\begin{equation}
    \frac{\operatorname{Pr}\left[\mathcal{M}_2\left(\mathbf{x}\right) = s\right]}{\operatorname{Pr}\left[\mathcal{M}_2\left(\mathbf{x}^{\prime}\right) = s\right]}=\frac{\operatorname{Pr}\left[x_j=s_j\right]}{\operatorname{Pr}\left[x_j^{\prime}=s_j\right]}\leq \frac{e^{\varepsilon_f}/(e^{\varepsilon_f}+1)}{1/(e^{\varepsilon_f}+1)}=e^{\varepsilon_f}.
\end{equation}
Therefore, we can obtain that information collection satisfies $\varepsilon_a$-edge LDP and $\varepsilon_f$-feature LDP.
\end{proof}

\begin{theorem}
    HoGS satisfies $\varepsilon$-LDP, where $\varepsilon=\varepsilon_a+\varepsilon_f$.
\end{theorem}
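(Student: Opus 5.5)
The plan is to combine the lemma with the two composition properties stated in the preliminaries. By the lemma, the local randomizer each node $v_i$ runs in Information Collection (Algorithm \ref{alg:IC}) consists of two independent sub-mechanisms. The first is $\mathcal{M}_1$, randomized response on the adjacency list $A_i$, which satisfies $\varepsilon_a$-edge LDP. The second is $\mathcal{M}_2$, the 1-Bit mechanism on the feature vector $X_i$, which satisfies $\varepsilon_f$-feature LDP.

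First, I would view the private input of node $v_i$ as the pair $(A_i, X_i)$ and the released output as the pair $(\tilde{A}_i,\tilde{X}_i)$. The natural neighboring relation is that the pairs differ in one bit of $A_i$ or in one bit of $X_i$. Because the two perturbations use independent randomness, the output probability factorizes:
\[
\operatorname{Pr}[\mathcal{M}(A_i,X_i)=(o,s)]=\operatorname{Pr}[\mathcal{M}_1(A_i)=o]\cdot\operatorname{Pr}[\mathcal{M}_2(X_i)=s].
\]
For any two neighboring inputs, the likelihood ratio is then at most $e^{\varepsilon_a}\cdot e^{\varepsilon_f}=e^{\varepsilon_a+\varepsilon_f}$. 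This holds even in the worst case where both components change, since each factor is bounded by the lemma. This is precisely Theorem \ref{sequential composition} applied with $k=2$. So the collection step satisfies $\varepsilon$-LDP with $\varepsilon=\varepsilon_a+\varepsilon_f=(1-\delta)\varepsilon+\delta\varepsilon$.

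Second, I would observe that Topology Reconstruction (Algorithm \ref{alg:TR}) and Feature Reconstruction (Algorithm \ref{alg:FR}) are carried out entirely by the curator. They use only the noisy reports $\{\tilde{A}_i,\tilde{X}_i\}_{i\in[n]}$ together with public quantities: $p_1$, $\tau$, the $0.5$ cutoff, and the number of aggregation rounds $l$. They never access any $A_i$ or $X_i$ again. Hence the synthetic graph $\hat{G}=(V,\hat{E},\hat{X})$ is some deterministic function $g$ of the collected reports. Theorem \ref{post-processing} then implies that HoGS inherits the $\varepsilon$-LDP guarantee, and this holds for every repetition count $l$.

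The main obstacle is a subtle point in the post-processing step. The output $\hat{G}$ depends on the reports of \emph{all} nodes, whereas LDP is a per-user guarantee. I would handle this by fixing a user $v_i$ and conditioning on the other users' reports. Those reports are independent of $v_i$'s private data, so $\hat{G}$ is a randomized function of $\mathcal{M}(A_i,X_i)$ whose additional randomness does not depend on $(A_i,X_i)$. Post-processing therefore still applies user by user.

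I would also note a caveat about the adjacency bits. Since $A_{ij}=A_{ji}$ in an undirected graph, a single edge appears in two users' lists. The guarantee is stated per adjacency list, consistent with the edge-LDP definition, so I would not attempt to account for this double appearance.
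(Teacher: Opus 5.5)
Your proposal follows the paper's proof: the lemma plus sequential composition gives $(\varepsilon_a+\varepsilon_f)$-LDP for Information Collection, and Topology and Feature Reconstruction are pure post-processing of the noisy reports. Your additions only make explicit what the paper leaves implicit. These are the one-bit neighboring relation (the reading the stated budget requires, since for arbitrary input pairs the budget would scale with $n$ and $d$), conditioning on the other users' reports, and the caveat that each $A_{ij}=A_{ji}$ appears in two lists.
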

\begin{proof}
Recall that HoGS consists of three phases: information collection, topology reconstruction and feature reconstruction. Based on Lemma \ref{lemma} and Theorem \ref{sequential composition}, the phase of information collection achieves $(\varepsilon_a+\varepsilon_f)$-LDP. For topology reconstruction and feature reconstruction, HoGS only post-processes the perturbed data uploaded by the nodes without consuming the privacy budget (according to Theorem \ref{post-processing}). To sum up, HoGS satisfies $\varepsilon$-LDP.
\end{proof}

\subsection{Computational Complexity} \label{Computational Complexity}
We focus on analyzing the computational complexity of the graph synthesis process. The complexity of subsequently training GNNs on the synthetic graph is not discussed, as it employs existing models and standard training protocols. In particular, for HoGS, the computational complexity of information collection phase is $O(n^2+nd)$, since each node needs to perturb every bit in its adjacency list and feature vector. In the topology reconstruction phase, the curator first calculates the cosine similarity of the node feature vectors and then estimates the link probabilities, which have computational complexities of $O(n^2d)$ and $O(n^2)$, respectively.
Feature reconstruction phase aggregates the node features from potential neighbors, with the computational complexity of $O(n^2d)$. Therefore, the total computational complexity of HoGS in the graph synthesis process is $O(n^2d+nd)$.

\subsection{Technical Novelty} \label{Technical Novelty}
In the field of GNNs with local differential privacy, the simultaneous protection for node feature privacy and link privacy has been first studied in the literature \cite{lin2022towards}. However, a major problem is that \cite{lin2022towards} independently calibrates the noisy adjacency matrix by encouraging graph structure sparsity, which is insufficient for training effective GNNs as it only guarantees the overall graph density. Although a recent study \cite{zhu2023blink} improved the accuracy of noisy graph topology, their scheme, as mentioned above, requires further splitting of the privacy budget to node degree information. In our scenario, this requirement significantly affect the performance of GNNs training. In contrast, our framework HoGS only needs to allocate the privacy budget to the adjacency lists and node features, and guarantees the utility through bidirectional denoising. To our knowledge, this is a novel and practical method that has not been explored in existing literature.

\section{Evaluation} \label{Evaluation}
We constructed extensive experiments on real-world datasets to demonstrate the privacy-utility performance of HoGS for GNN training. The experiments are conducted in Python on a machine with two Intel(R) Xeon(R) Gold 6230R CPUs, 314GB RAM, and an NVIDIA A100 80GB GPU running Ubuntu 22.04 LTS. We implement HoGS and other mechanisms in an environment based on the PyTorch and PyTorch Geometric library.

\subsection{Experimental Setup} \label{Experimental Setup}
\noindent
\textbf{Datasets.} We use three publicly available real-world datasets, which are described below:
\begin{itemize}
	\item \textbf{Cora and CiteSeer \cite{yang2016revisiting}.} These are two well-known citation network datasets that are commonly used as benchmarks for node classification. Each node denotes a scientific publication and links stand for the citation relationships. Each publication is associated with a bag-of-words feature vector and a label representing the category.
    \item \textbf{LastFM \cite{rozemberczki2020characteristic}.} This is a social network dataset collected from the music streaming service LastFM. Nodes are users from Asian countries and links denote friendships between users. Each user has a feature vector for favorite artists and a label for home country. Since this dataset is severely imbalanced, we only retain the top 10 classes with the largest sample sizes, as done in \cite{sajadmanesh2021locally}.
\end{itemize}
Table \ref{Datasets} summarizes detailed statistics of all datasets.
\begin{table}[!t]
	\renewcommand{\arraystretch}{1.2}
	\caption{Details of Real-world Datasets}
	\label{Datasets}
	\centering
	\begin{tabular}{c c c c c c}
		\hline
		\textbf{Datasets} & \textbf{Classes} & \textbf{Nodes} & \textbf{Edges} & \textbf{Features}  & $\textbf{Avg. Degree}$\\
		\hline
        Cora & 7 & 2708 & 5278 & 1433 & 3.90\\
		CiteSeer & 6 & 3327 & 4552 & 3703 & 2.74\\
		LastFM & 10 & 7083 & 25814 & 7842 & 7.29\\
		\hline
	\end{tabular}
\end{table}

\noindent
\textbf{Baselines.} We compare the performance of HoGS with the following three baselines:
\begin{itemize}
	\item \textbf{FP-BLink.} The mechanism BLink \cite{zhu2023blink} considers GNN training with edge LDP. Since the original design of Blink only involves the protection of graph topology, we enhance it with our feature perturbation and reconstruction mechanism, from which we get the baseline FP-BLink.
    \item \textbf{FP-LDPGen.} This method LDPGen \cite{qin2017generating} generates a synthetic graph topology to perform graph statistics tasks, which implements edge LDP. Similarly, for comparability, we adapt LDPGen to our scenario by combining it with our mechanisms on node features. We denote this baseline as FP-LDPGen. 
    \item \textbf{Solitude.} Liu et al. proposed Solitude \cite{lin2022towards} as an LDP mechanism to simultaneously protect the links and features of the training graph. However, their feature privacy definition is user-level, which differs from ours. To ensure the rationality of comparison, we reproduce this mechanism under our feature LDP setting.
\end{itemize}
In addition, to further demonstrate the performance of HoGS in non-private feature scenarios, the standard BLink \cite{zhu2023blink} is also included as an additional baseline.\\

\noindent
\textbf{Experimental Settings.} Following previous research \cite{sajadmanesh2021locally,zhu2023blink}, for each dataset, we randomly split all nodes into training/validation/test sets with the ratio of 50$\%$, 25$\%$ and 25$\%$, respectively. We adopt three well-established GNN architectures, including GCN \cite{kipf2017semisupervised}, GraphSAGE \cite{hamilton2017inductive} and GAT \cite{velivckovic2017graph}, as the backbone models for HoGS and other baselines. 
All GNN architectures have the same model structure: two convolutional layers with 16 hidden units and a ReLU operator followed by a dropout layer. The GAT architecture additionally has four attention heads. Note that we did not run the baseline Solitude on the GAT architecture, as it is unreasonable to make all nodes attend to each other \cite{velivckovic2017graph}. To better compare various LDP mechanisms, we also implemented non-private GNNs (which can be viewed as the case of the privacy budget $\varepsilon=\infty$) on all datasets as a theoretical bound of the experimental performance. 

\noindent
\textbf{Parameter Settings.} In our experiments, we vary the total privacy budget $\varepsilon$ from $3$ to $8$, a range widely used in real-world industrial practice under LDP settings \cite{DPTeamApple}. 
For each $\varepsilon$ value, we conducted a grid search on the validation set to find the best-performing hyperparameters for each combination of LDP mechanisms, GNN architectures and datasets. To reduce the randomness introduced by the LDP mechanisms, we performed 5 independent runs for each hyperparameter configuration and selected the configuration with the highest mean accuracy. The hyperparameter search space is set as follows: For non-private GNNs and all mechanisms other than Solitude, the dropout rate is searched from $\{10^{-1},10^{-2},10^{-3},0\}$, learning rate is searched from $\{10^{-1},10^{-2},10^{-3}\}$ and weight decay is searched from $\{10^{-3},10^{-4},10^{-5},0\}$. For HoGS, the parameter $\delta$ is searched from $\{0.1, 0.3, 0.5, 0.7, 0.9\}$, threshold $\tau$ is searched from $\{0.5, 0.7, 0.9\}$ and step parameter $l$ is searched from $\{0,1,2\}$.
For BLink, the privacy parameter $\delta$ is searched from $\{0.1, 0.3, 0.5, 0.7, 0.9\}$. For FP-BLink, given the privacy parameters $\delta_1$ and $\delta_2$, we set the privacy budgets for the  degree, adjacency list and feature vector to $\varepsilon_{d}=\delta_1 (1-\delta_2)\varepsilon$, and $\varepsilon_{a}=\delta_1 \delta_2\varepsilon$ and $\varepsilon_{f}=(1-\delta_1)\varepsilon$, respectively. Then $\delta_1$ and $\delta_2$ are searched from $\{0.25,0.33,0.5,0.66,0.75\}$. To compare under the same $\varepsilon$, for LDPGen and Solitude, we set the parameter $\delta$ similarly to HoGS, such that the privacy budgets for the graph topology and feature vector are $\varepsilon_a=(1-\delta) \varepsilon$ and $\varepsilon_f=\delta \varepsilon$, respectively, and $\delta$ is searched from $\{0.1, 0.3, 0.5, 0.7, 0.9\}$. In addition, for Solitude, the dropout rate and learning rate is searched from $\{10^{-1},10^{-2},10^{-3}\}$, weight decay is searched from $\{10^{-3},10^{-4},10^{-5},0\}$, both $l_x$ and $l_y$ are searched from $\{0,2\}$.

\noindent
\textbf{Metrics.} We evaluate the performance of HoGS and all baselines using model accuracy on the test set. Specifically, we run each experiment 10 times based on the optimal parameters found in the grid search, and report the mean and standard deviation of classification accuracy.

\subsection{End-to-End Evaluation} \label{End-to-End Evaluation}
For each GNN architecture, we first perform an end-to-end evaluation of HoGS and the baselines on all datasets in the scenario of link and node feature protection. 

\noindent
\textbf{Utility on GCN.} Fig. \ref{fig_gcn} shows the average test accuracy of HoGS and all baselines on the GCN model. We can see that the test accuracy of all methods shows an upward trend with the increase of privacy budget, reflecting the privacy-utility trade-off in differential privacy. Moreover, HoGS significantly outperforms all baselines on the three datasets. The reasons are follows: First, HoGS only needs to split the privacy budget into two parts to protect the privacy of link and node features respectively, avoiding further consumption of the privacy budget compared to the baseline FP-BLink. Second, the bidirectional denoising scheme adopted by HoGS effectively reconstructs the graph structure and feature information. For Solitude, we find that it achieves poor performance since it only utilizes the inherent sparsity of the graph to denoise link information. The performance improvement of the baseline FP-LDPGen is minimal because they use a degree vector-based random graph model to synthesize the graph, which is only designed to preserve graph statistics. In addition, when the privacy budget is small (i.e., $\varepsilon=3$), the advantage of HoGS is not significant compared to other methods. This is because the small privacy budget introduces excessive noise, resulting in unsatisfactory bidirectional denoising.

\begin{figure*}[!t]
	\centering
    \includegraphics[width=0.6\linewidth]{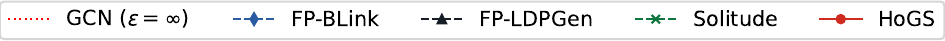}\\
    \vspace{-0.2cm}
	\subfloat[Cora]{
		\includegraphics[width=0.3\linewidth]{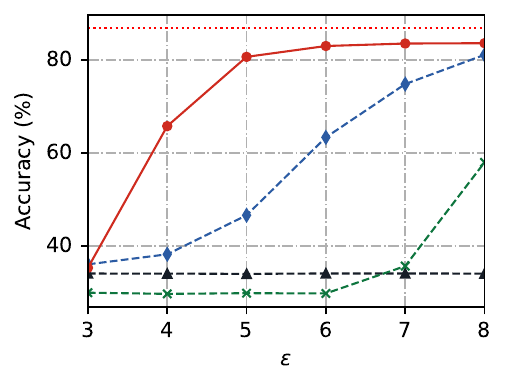}
	}\hspace{-2.6mm}
	\subfloat[CiteSeer]{
		\includegraphics[width=0.3\linewidth]{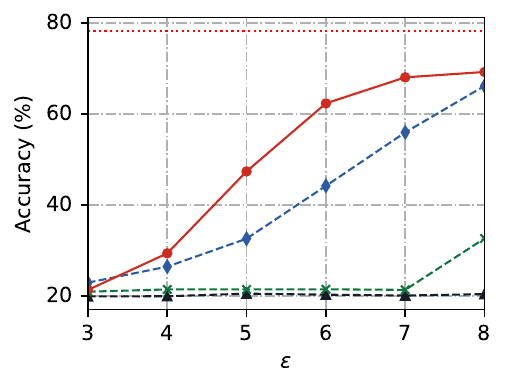}
	}\hspace{-2.6mm}
	\subfloat[LastFM]{
		\includegraphics[width=0.3\linewidth]{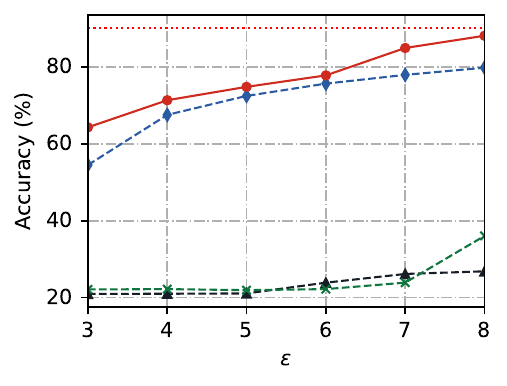}
	}
	\caption{Performance comparison of HoGS with other methods on GCN model given different privacy budgets.}
	\label{fig_gcn}
\end{figure*}

\noindent
\textbf{Utility on GraphSAGE.} Fig. \ref{fig_graphsage} illustrates the average test accuracy of all methods on the GraphSAGE model. We can observe that HoGS achieves higher classification accuracy than the baselines. For the CiteSeer dataset with low average node degree, the LDP perturbation under a small privacy budget severely disrupts the sparse topology, leading to low accuracy. We also found that for the LastFM dataset, FP-LDPGen and Solitude have higher accuracy on the GraphSAGE model than the GCN model. This is because these two baselines preserve the community structure and density of LastFM as a social network, while the GraphSAGE architecture independently transforms node features before aggregation, thus better utilizing the graph structure information preserved by the baseline methods. Furthermore, an interesting phenomenon is that as the privacy budget increases, the performance gap of HoGS on the GraphSAGE model compared to the non-privacy baseline is significantly larger than the gap on the GCN model. The reason is that the parameterized aggregator in GraphSAGE is easily misled by noisy features during training, which leads to the learning of a biased aggregation function and limits the upper bound of performance.

\begin{figure*}[!t]
	\centering
    \includegraphics[width=0.6\linewidth]{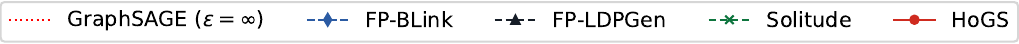}\\
    \vspace{-0.2cm}
	\subfloat[Cora]{
		\includegraphics[width=0.3\linewidth]{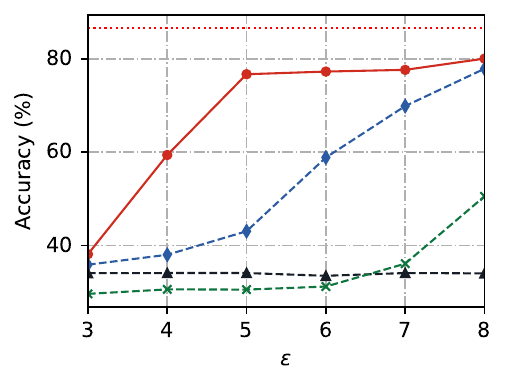}
	}\hspace{-2.6mm}
	\subfloat[CiteSeer]{
		\includegraphics[width=0.3\linewidth]{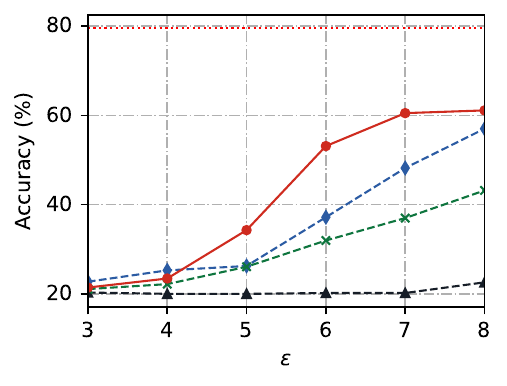}
	}\hspace{-2.6mm}
	\subfloat[LastFM]{
		\includegraphics[width=0.3\linewidth]{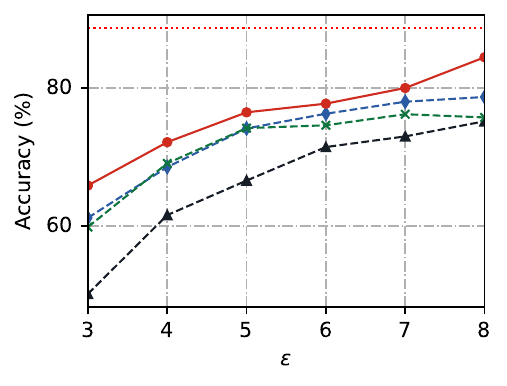}
	}
	\caption{Performance comparison of HoGS with other methods on GraphSAGE model given different privacy budgets.}
	\label{fig_graphsage}
\end{figure*}

\noindent
\textbf{Utility on GAT.} Fig. \ref{fig_gat} shows the average test accuracy of HoGS and the baselines on the GAT model. HoGS obtains state-of-the-art performance across all datasets, which indicates that HoGS better recovers the link and feature information of the original graph. Compared to the other two models, the baseline FP-LDPGen showed no significant performance improvement on all three datasets. This is because the random topology generated by FP-LDPGen cannot provide meaningful learning signals for the attention mechanism of GAT, thus preventing the GAT model from leveraging its advantages. This highlights the key value of our proposed HoGS framework: through Homophily-based joint reconstruction, the synthetic graph generated by HoGS preserves the semantic association between topology and features, which not only improves accuracy but also expands the range of GNN models available under privacy protection.

\begin{figure*}[!t]
	\centering
    \includegraphics[width=0.45\linewidth]{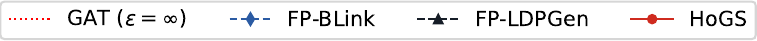}\\
    \vspace{-0.2cm}
	\subfloat[Cora]{
		\includegraphics[width=0.3\linewidth]{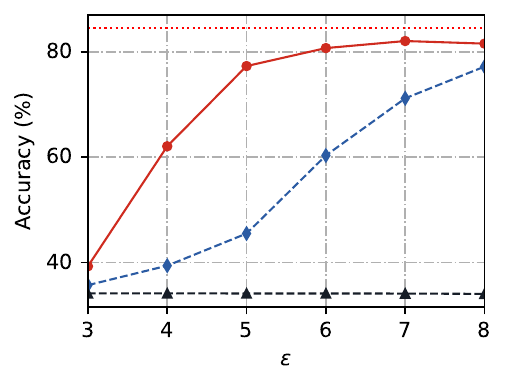}
	}\hspace{-2.6mm}
	\subfloat[CiteSeer]{
		\includegraphics[width=0.3\linewidth]{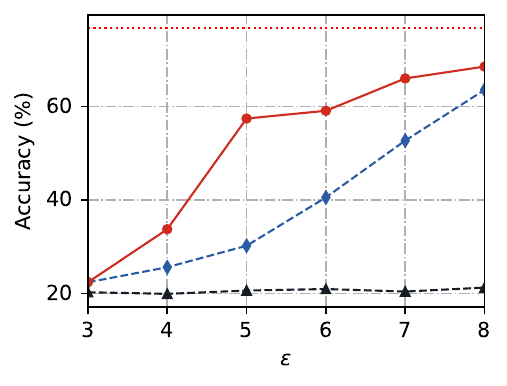}
	}\hspace{-2.6mm}
	\subfloat[LastFM]{
		\includegraphics[width=0.3\linewidth]{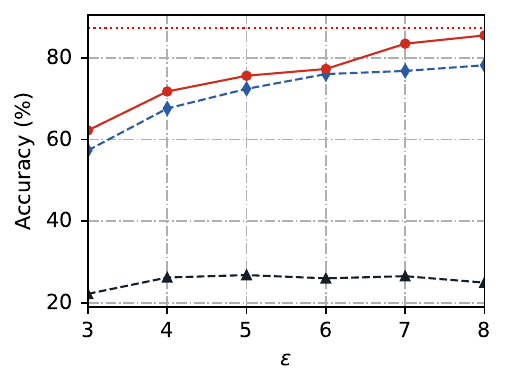}
	}
	\caption{Performance comparison of HoGS with other methods on GAT model given different privacy budgets.}
	\label{fig_gat}
\end{figure*}

\subsection{Evaluation under Non-Private Features} \label{Evaluation under Non-Private Features}
To comprehensively evaluate the performance of HoGS, we further construct experiments in a non-private feature scenario. In this scenario, HoGS adopts the entire privacy budget to perturb adjacency lists while treating node features as public information, thereby implementing edge LDP. We compared it with BLink \cite{zhu2023blink}, a state-of-the-art mechanism in the same privacy scenario. Table \ref{Non-Private Features} shows the experimental results for all datasets and GNN architectures at $\varepsilon=\{3,4,5\}$. We focus on this stricter privacy budget range (lower $\varepsilon$) to align with the financial risk management scenario highlighted in Section \ref{INTRODUCTION}: since graph edges represent highly confidential financial relationships, they demand rigorous privacy protection, making performance under low privacy budgets particularly critical. 

We can observe that HoGS outperforms BLink in most cases. This is because HoGS only needs to inject noise into the adjacency list and denoise it using node features that do not consume privacy budget. In contrast, BLink requires additional privacy budget for node degrees. It is worth noting that under strong privacy settings (e.g., $\varepsilon=3$), HoGS performs worse than BLink on the GAT model, which is mainly due to the unique attention mechanism of GAT. Unlike GCN and GraphSAGE, the neighbor importance weights learned by GAT are highly dependent on node features. With an overly small privacy budget, the highly noisy adjacency list introduces many false neighbors whose features are then used in attention calculations, thereby misleading the model and amplifying errors.

\begin{table*}[!t]
	\renewcommand{\arraystretch}{1.5}
	\caption{Utility Comparison of HoGS and BLink under Non-Private Features}
		\label{Non-Private Features}
	\centering
	\begin{tabular}{c |c c c |c c c |c c c}
			\hline
			\multirow{2}{*}{\makecell[c]{\textbf{Accuracy} \\ \textbf{($\%$)}}} & \multicolumn{3}{c|}{\textbf{Cora}} & \multicolumn{3}{c|}{\textbf{CiteSeer}} & \multicolumn{3}{c}{\textbf{LastFM}} \\
			\cline{2-10}
			& $\varepsilon=3$ & $\varepsilon=4$ & $\varepsilon=5$ & $\varepsilon=3$ & $\varepsilon=4$ & $\varepsilon=5$ & $\varepsilon=3$ & $\varepsilon=4$ & $\varepsilon=5$ \\
			\hline
            \hline
            \multicolumn{10}{c}{GCN}\\
            \hline
			  BLink & $71.0\pm0.5$ & $77.0\pm0.8$ & $84.5\pm1.0$ 
              & $\mathbf{73.8}\pm0.4$ & $73.8\pm0.4$ & $76.7\pm0.8$ 
              & $70.9\pm0.6$ & $70.6\pm1.3$ & $78.9\pm0.8$  \\
			HoGS & $\mathbf{73.3}\pm1.4$ & $\mathbf{82.6}\pm0.8$ & $\mathbf{84.7}\pm0.4$    
            & $66.3\pm1.2$ & $\mathbf{75.6}\pm1.1$ & $\mathbf{78.9}\pm0.5$    
            & $\mathbf{71.0}\pm1.8$ & $\mathbf{81.9}\pm0.6$ & $\mathbf{87.0}\pm0.3$  \\   
            \hline
            \hline
            \multicolumn{10}{c}{GraphSAGE}\\
			\hline
            BLink & $71.7\pm0.8$ & $77.2\pm1.3$ & $84.2\pm0.8$ 
            & $73.8\pm0.5$ & $74.0\pm0.4$ & $77.3\pm0.9$ 
            & $71.9\pm1.1$ & $75.0\pm1.3$ & $79.2\pm1.6$  \\
			HoGS & $\mathbf{77.4}\pm0.9$ & $\mathbf{83.1}\pm0.7$ & $\mathbf{84.9}\pm0.6$    
            & $\mathbf{74.8}\pm1.0$ & $\mathbf{77.1}\pm0.9$ & $\mathbf{79.0}\pm0.6$    
            & $\mathbf{80.5}\pm0.9$ & $\mathbf{84.0}\pm0.7$ & $\mathbf{86.6}\pm0.5$  \\    
            \hline
            \hline
            \multicolumn{10}{c}{GAT}\\
			\hline
            BLink & $\mathbf{71.1}\pm0.5$ & $71.2\pm0.6$ & $81.6\pm1.3$ 
            & $\mathbf{73.5}\pm0.3$ & $\mathbf{73.4}\pm0.3$ & $73.4\pm0.4$ 
            & $\mathbf{71.1}\pm0.7$ & $70.8\pm0.9$ & $78.5\pm0.8$  \\
			HoGS & $63.1\pm3.3$ & $\mathbf{79.8}\pm1.1$ & $\mathbf{82.9}\pm1.0$    
            & $54.6\pm2.6$ & $70.8\pm1.8$ & $\mathbf{76.5}\pm1.1$    
            & $50.9\pm7.9$ & $\mathbf{72.4}\pm3.4$ & $\mathbf{83.9}\pm1.1$  \\    
            \hline
	\end{tabular}
\end{table*}

\subsection{Ablation Study} \label{Feature Reconstruction Analysis}

We perform an ablation study to validate the effectiveness of our reconstruction mechanisms in GNN training. Specifically, we created two variants: (1) BLink + FR, where the topology reconstruction is replaced by the state-of-the-art topology denoising method BLink; and (2) TR + KProp, where the feature reconstruction is replaced by KProp \cite{lin2022towards} -- a advanced scheme that optimizes node features by averaging the feature vectors of $K$-hop neighbors. Fig. \ref{fig_khop} shows the performance of HoGS and different variants when the total privacy budget $\varepsilon=4$. We can find that our method significantly outperforms BLink + FR and TR + KProp for $K=1$ and $K=2$ on all datasets. This demonstrates that our feature similarity-aware topology reconstruction and weighted aggregation-based feature reconstruction mechanisms can effectively mitigate noise in both links and node features, thereby improving the test accuracy of GNNs.

\begin{figure}[!t]
	\centering
    \includegraphics[width=0.6\linewidth]{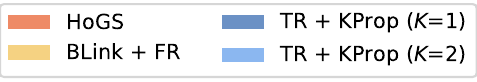}
	\includegraphics[width=0.75\linewidth]{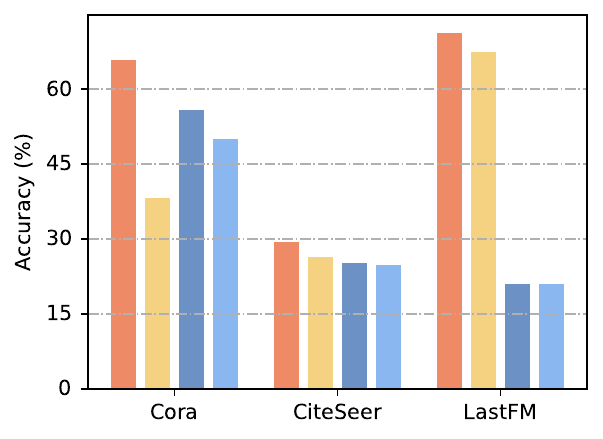}
	\caption{Performance of HoGS with different topology and feature denoising methods.}
	\label{fig_khop}
\end{figure}

\subsection{Parameter Variation} \label{Parameter Variation}
\noindent
\textbf{Effect of Privacy Budget Allocation.} We analyze the effect of privacy budget allocation on GNN performance by varying the privacy parameter $\delta$. Fig. \ref{fig_delta} shows the classification accuracy of graph convolutional network on the Cora and Lastfm datasets as $\delta$ ranges from 0.1 to 0.9. We found that the impact of $\delta$ on accuracy varies significantly across different datasets. Specifically, for the Cora dataset, a smaller $\delta$ yields better utility, whereas for the LastFM dataset, a larger $\delta$ improves model performance by preserving feature quality. This discrepancy arises from the varying modality dependency of different datasets in the node classification task. As a typical citation network, the classification performance on Cora heavily relies on the structural integrity of the graph topology. Therefore, a smaller $\delta$ (implying a higher topology privacy budget $\varepsilon_a$) is more advantageous. Conversely, the LastFM dataset is characterized by extremely high feature dimensionality, and its node features (users' favorite artists) contain highly discriminative information for inferring the users' home countries. Therefore, a larger $\delta$ (implying a higher feature privacy budget $\varepsilon_f$) is required to mitigate the degradation of the signal-to-noise ratio caused by high-dimensional feature perturbation.

\begin{figure}[!t]
	\centering
    \includegraphics[width=0.56\linewidth]{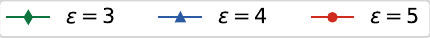}
	\subfloat[Cora]{
		\includegraphics[width=0.46\linewidth]{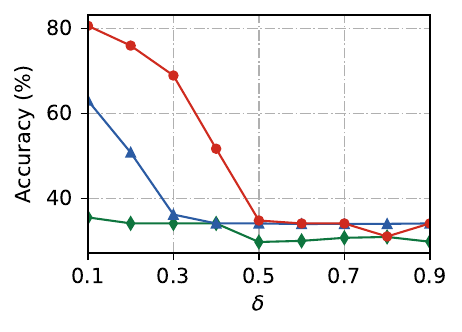}
	}\hspace{-4mm}
	\subfloat[LastFM]{
		\includegraphics[width=0.46\linewidth]{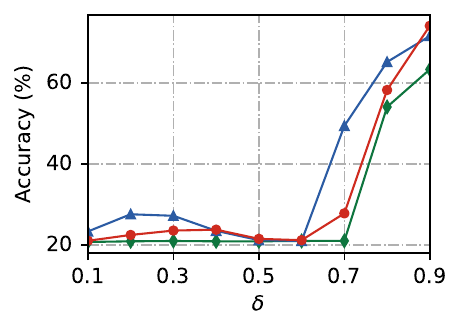}
	}
	\caption{Effect of privacy parameter $\delta$ on GCN test accuracy.}
	\label{fig_delta}
\end{figure}

\noindent
\textbf{Effect of Threshold $\tau$.} Recalling Section \ref{Topology Reconstruction}, we determine the links in the synthetic graph topology based on the threshold $\tau$. To better understand the effect of different $\tau$ on GNN performance, we report the accuracy of graph convolutional networks on the Cora and Lastfm datasets as $\tau$ varies from 0.5 to 0.9, as shown in Fig. \ref{fig_tau}. For the Cora dataset, we observe that the test accuracy of GNNs exhibits a relatively stable or slightly increasing trend as $\tau$ increases across different privacy budgets. This is primarily attributed to the significant sparsity of Cora: under LDP mechanisms, sparse graphs are more susceptible to random noise, resulting in numerous spurious connections. Increasing $\tau$ can filter out some noisy edges, thereby improving the utility of the trained GNN. For denser networks like LastFM, the impact of $\tau$ is closely related to the privacy budget $\varepsilon$. Under relatively high privacy budgets, appropriately increasing $\tau$ (e.g., to 0.7) can significantly filter noise while preserving strong connections, leading to a substantial performance boost. However, under low privacy budgets, since the connection probability of real links are generally suppressed by noise, a high $\tau$ causes the loss of some structural information, resulting in a slight decrease in model utility.

\begin{figure}[!t]
	\centering
    \includegraphics[width=0.56\linewidth]{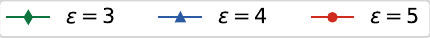}
	\subfloat[Cora]{
		\includegraphics[width=0.46\linewidth]{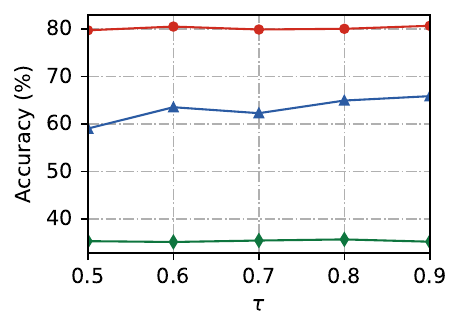}
	}\hspace{-2.6mm}
	\subfloat[LastFM]{
		\includegraphics[width=0.46\linewidth]{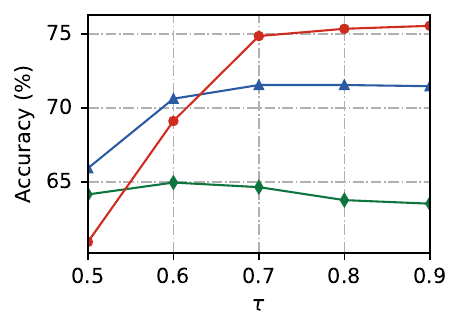}
	}
	\caption{Effect of threshold $\tau$ on GCN test accuracy.}
	\label{fig_tau}
\end{figure}

\section{Related Work} \label{Related Work}
\subsection{Graph Neural Networks} \label{Graph Neural Networks}

Graph neural networks have been widely used in recent years for various graph-based learning tasks, including node classification and link prediction. A series of influential GNN models have been proposed, such as Graph Convolutional Networks \cite{kipf2017semisupervised}, GraphSAGE \cite{hamilton2017inductive}, Graph Attention Networks \cite{velivckovic2017graph}, and Jumping Knowledge Networks \cite{xu2018representation}. Since our graph generation scheme is independent of the GNN learning process, we refer the audience to existing surveys for details on GNN models, including their performance and applications.

\subsection{GNNs via Differential Privacy} \label{End-to-End Evaluation}
There have been some studies on differentially private GNNs. Specifically, under the central DP model, Wu et al. \cite{wu2022linkteller} studied the link-stealing attacks on GCNs and proposed a DP mechanism for GCN training, called DpGCN, which provides DP guarantees for edges in the graph. Kolluri et al. \cite{kolluri2022lpgnet} further improved the utility of GCN-based node classification under edge DP by inserting graph structure information into the multi-layer perceptrons (MLPs) model that only uses node features. Sajadmanesh et al. \cite{sajadmanesh2023gap} proposed a novel GNN architecture that can provide both edge DP and node DP, where they add noise to the low-dimensional aggregation embeddings one-time and cache them to save privacy budget. Recently, Qi et al. \cite{qi2025hiding} explored the problem of protecting partially sensitive node features in a setting where graph structures are publicly available. Under the local DP (LDP) model, Sajadmanesh and Gatica-Perez \cite{sajadmanesh2021locally} first presented the Locally Private GNN to protect the privacy of node features and labels but not the graph structure. Hidano and Murakami \cite{hidano2022degree} proposed a degree-preserving LDP mechanism to protect edges in GNNs. Zhu et al. \cite{zhu2023blink} proposed BLink, which achieves edge LDP by independently injecting noise into the adjacency list and node degree. Pei et al. \cite{pei2023privacy} designed an LDP mechanism based on local graph enhancement to protect a subset of node features. The aforementioned studies on GNNs with LDP are limited to protecting either node features or graph structure, but not both. Lin et al. \cite{lin2022towards} investigated dual protection for node features and edges, which is based on graph sparsity to denoise the topology and adopts the scheme in \cite{sajadmanesh2021locally} to protect node features. However, their LDP mechanism has different privacy definition for node features than ours and results in poor privacy-utility tradeoffs.

\subsection{Graph Analysis via Differential Privacy} \label{End-to-End Evaluation}
In the field of differential privacy graph analysis, many works aim to estimate specific graph statistics such as subgraph counts \cite{sun2019analyzing,imola2021locally,imola2022differentially,liu2024edge}, relationship frequency \cite{wang2023accurately} and clustering coefficient \cite{wang2013learning,ye2020lf}. There are also several works dedicated to publishing a synthetic graph, which are closely related to our research. Specifically, Qin et al. \cite{qin2017generating} presented a multi-stage technique based on degree vectors to gradually cluster nodes and construct a synthetic graph, while satisfying edge LDP. Jian et al. \cite{jian2021publishing} proposed two novel algorithms, including node perturbation and a more utility-preserving edge perturbation, to achieve graph publication with node DP. Yuan et al. \cite{yuan2023privgraph} achieved graph generation with edge DP by dividing the graph into multiple communities and then extracting intra-community and inter-community information. Brito et al. \cite{brito2023global} developed two mechanisms for publishing count-weighted graphs under central and local DP, which combine geometric noise addition, statistical preservation, and post-processing to enhance data utility. Xu et al. \cite{xu2024differentially} studied the problem of dynamically weighted graph publishing with DP, which aims to generate a sequence of weighted graph snapshots. However, all of these efforts on synthetic graph publication are focused on downstream statistical analysis tasks, which are not useful for training GNNs.

\section{Conclusion} \label{Conclusion}
In this paper, we propose HoGS, a novel local differentially private graph synthesis framework that preserves the privacy of link and node features while maintaining the accuracy of graph neural network training. The framework consists of three key components: information collection, topology reconstruction, and feature reconstruction, which leverage the homophily of graphs to achieve a better balance between privacy preservation and utility loss. Extensive experiments on multiple datasets and GNN architectures demonstrate that HoGS significantly outperforms other methods. An important future direction is to extend privacy preservation to richer information types, such as edge weights, to achieve more comprehensive privacy protection for graph data.

\bibliographystyle{IEEEtran}
\bibliography{IEEEabrv,mybib2}

\vspace{11pt}

\vfill

\end{document}